\documentclass[letterpaper, 10 pt, conference]{ieeeconf}

\usepackage{mathtools}
\usepackage{amsmath,mleftright}
\usepackage{xparse}
\usepackage{comment}
\usepackage{microtype}

\usepackage{graphicx}%
\usepackage{amsmath}
\usepackage{amssymb}
\let\labelindent\relax
\usepackage{enumitem}
\usepackage{tabularx}
\usepackage[table]{xcolor}
\usepackage[noadjust]{cite} 
\usepackage{mathrsfs}

\usepackage{amsthm}

\usepackage[colorinlistoftodos]{todonotes}

\newtheorem{proposition}{Proposition}

\newtheorem{definition}{Definition}%

\usepackage[colorlinks=false,urlcolor=blue]{hyperref} 
\hypersetup{
    colorlinks=true,
    linkcolor=darkgray,
    filecolor=magenta,      
    urlcolor=cyan,
     citecolor    = darkgray
}

 	\definecolor{newgreen}{RGB}{0,127,0}
  \definecolor{newblue}{RGB}{0,0,127}

\begin{document}

\title{\LARGE \bf
Efficient Computation of Distance Functions\\for Navigation Vector Fields in Lie Groups
}

\author{%
Vinicius M. Gon\c{c}alves, João Baião,
Felipe Bartelt, Douglas G. Macharet, Gustavo M. Freitas,\\ Héctor Azpúrua, Luciano C. A. Pimenta 
\thanks{
}%
}

\maketitle

\maketitle
\begin{abstract} 
Vector-field-based methods are widely used for robot control and are often applied to the path-tracking problem. Some vector field approaches require repeatedly computing the distance between the robot configuration and the curve, as well as the corresponding closest point. Recently, vector fields have been extended to Lie Groups. In this case, this computation can be expensive, especially when performed at high control frequencies on embedded platforms. This paper proposes a method for efficiently computing the distance between a point and a curve represented as what is called a G-polynomial curve, which is a curve representation that generalizes polynomial curves to matrix Lie groups. The proposed approach exploits the structure of these curves to reduce the problem to a small number of polynomial root-finding computations. Simulation results show that the method significantly reduces computation time while maintaining accuracy compared to existing optimization-based approaches. Practical formulas are also provided for the case of the group SE(3), and the method is validated experimentally on a robotic manipulator. The methodology is implemented in a computational package, available online.
\end{abstract}

\section{Introduction}

Vector fields have been widely used for robot control \cite{goncalves2010, AdrianoIEEE, yao2021singularity, yao2022topological, chen2025nonsingular, zhou2025inverse, bartelt2026constructive, nunes2026safe}  for several reasons, including their simplicity and strong theoretical foundations. A common problem addressed by vector field methods is the \emph{path tracking problem}, defined as follows: consider $h \in \mathbb{R}^n$ as the configuration of a fully actuated robot, whose dynamics are given by $\dot{h} = u$, and let $\mathcal{C} \subseteq \mathbb{R}^n$ be a one-dimensional differentiable target curve. The objective is to design a configuration-feedback controller $u = \Psi(h)$ such that, for any initial condition $h$, the configuration converges to $\mathcal{C}$ and subsequently traverses it. Some vector field approaches to this problem rely on computing, at each time instant, the \emph{distance} $D(h)$ between the current configuration $h$ and the curve $\mathcal{C}$, as well as the corresponding closest point $h^*(h)$ on the curve. For example, in \cite{AdrianoIEEE, nunes2026safe}, the \emph{Euclidean distance} must be computed. 

More recently, the work in \cite{bartelt2026constructive} extended the method in \cite{AdrianoIEEE} to the matrix Lie group setting. This is of engineering interest because it allows one, for example, to control the position and orientation of rigid objects independently. This situation is applicable to omnidirectional drones \cite{HamandiOmni} or the end effector of a manipulator with six or more degrees of freedom. Furthermore, it could be useful in other less-known groups, but that are also relevant in robotics, as the Galilean Group \cite{kelly2024making,mahony2025galilean}. Nevertheless, in this case, one must compute a specific Lie group distance between an element (matrix) $H$ and a curve $\mathcal{C}$ interpreted as a continuous sequence of Lie group elements. Since this distance must be computed at high frequencies on hardware platforms that are often computationally limited, such as embedded processors on drones, there is a strong need for efficient computation. 

\begin{figure}[t]
    \centering
    \includegraphics[width=.85\linewidth]{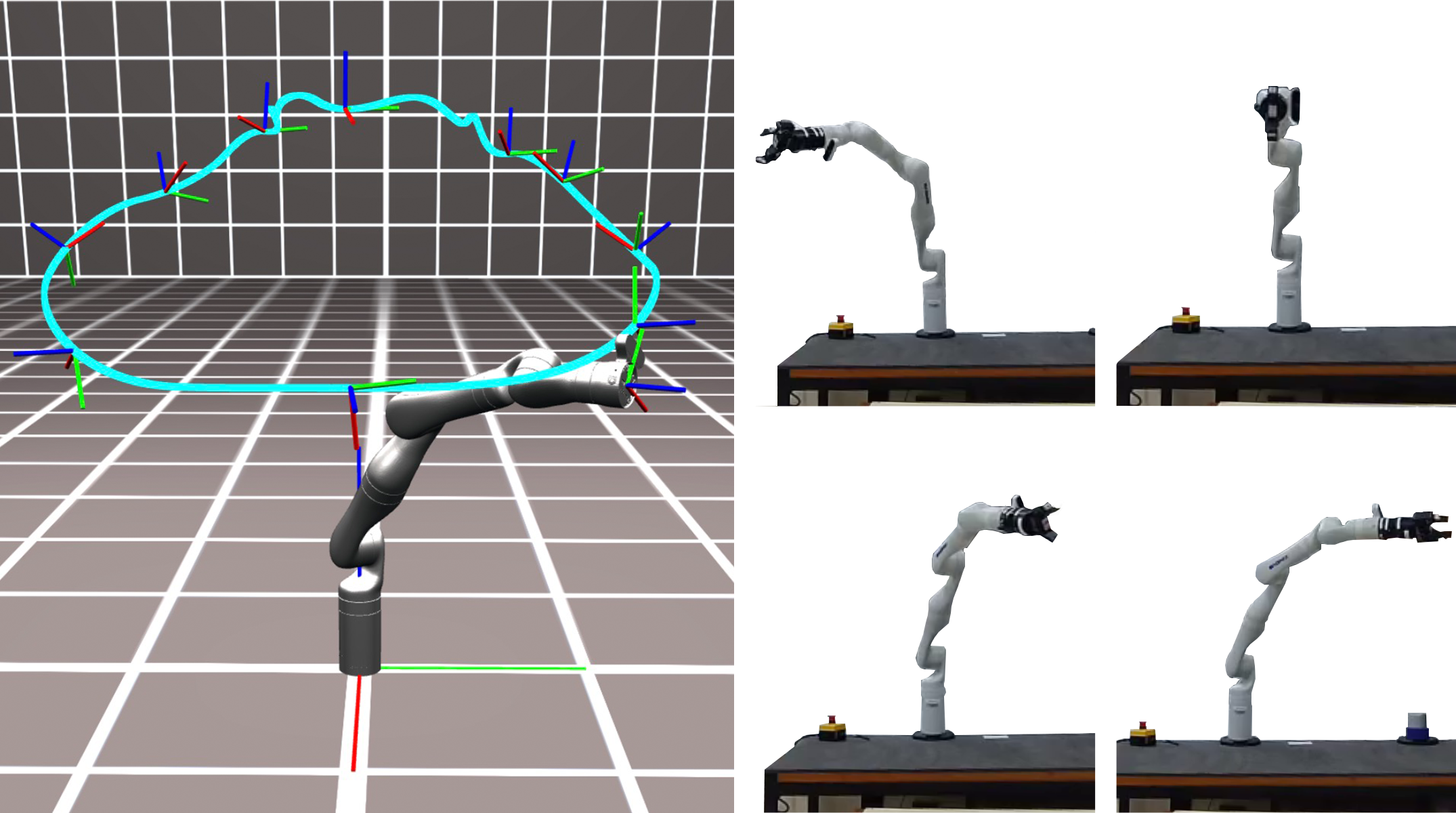}
    \caption{Closed-loop path tracking (cyan curve) experiment
using a 7-DOF Kinova Gen3 manipulator (right) to track a curve in the pose space.}
\vspace{-.7cm}
    \label{fig:introduction}
\end{figure}

More precisely, at each control loop one must solve a \emph{one-dimensional} optimization problem (since the curve $\mathcal{C}$ is one-dimensional). 
 One might be tempted to exploit the continuity of the problem, since from time $t$ to $t+\Delta t$ the element $H$ changes only slightly. However, although the distance $D(H)$ is guaranteed to be a continuous function of $H$, it is well known that the \emph{closest point map} $H^*(H)$ is, in general, not continuous. Discontinuities occur at points $H$ that are ``problematic'', that is, for which more than one closest point on the curve exists. Although reaching such points exactly is practically impossible (since they form a set of measure zero), in the vicinity of these points the closest point may change continuously but extremely rapidly. A classic example illustrating this behavior arises when $\mathcal{C}$ is the unit circle in $\mathbb{R}^2$. The center of the circle is a point of discontinuity of the closest point map. If one selects a point very close to the center and moves around it in a small circular motion, the closest point on $\mathcal{C}$ changes very rapidly. Therefore, continuity cannot be reliably exploited, since the system may always operate arbitrarily close to such problematic points. At each time step, we have to essentially solve a new optimization problem from scratch.

Thus, we face the situation in which, at each time step, we must reliably and quickly find the global minimum of a generally multimodal one-dimensional function $q:[0,K] \to \mathbb{R}$ over the bounded interval $[0,K]$. The situation described in \cite{bartelt2026constructive} is particularly challenging because evaluating $q(s)$ for a general matrix Lie group can be relatively expensive when compared to the particular case of $\mathbb{R}^n$ considered in \cite{AdrianoIEEE}, where the computation essentially reduces to a Euclidean distance. For example, in the case of the group $\textsl{SE}(3)$, which is of practical engineering interest, a single evaluation of $q(s)$ is roughly ten times slower than a single evaluation of $q(s)$ in the case $\mathbb{R}^3$.

The simplest alternative is a brute force approach: sample many points in $[0,K]$ and test each one of them for the minimum. This is too expensive. Thus, currently, perhaps the most suitable approach is to use the \emph{Piyavskii–Shubert algorithm}~\cite{shubert1972sequential}. Given a Lipschitz constant $L$ for $q(s)$ and a parameter $\delta$, the algorithm finds an interval of length at most $\delta$ that contains a global minimum $s^*$.

We propose an alternative to the Piyavskii–Shubert approach for the specific problem of computing the distance between a point $H$ and a curve $\mathcal{C}$, when this curve is what we call a \emph{$G$-polynomial curve}. This type of curve is a generalizations of piecewise polynomial curves in the Lie Group setting, and have a considerable flexibility to interpolate points $C_k$ while matching derivatives, creating a differentiable, one-dimensional curve. Similar curves were considered in other papers \cite{sefrant1998interpolation,watterson2016smooth,mueller2025poe}. The method builds on the observation that, in the particular case of the Euclidean group, if the curve $\mathcal{C}$ is represented by segments given as the image of polynomials of order $M$, then this distance can be computed by solving a small number of polynomial root-finding problems of degree $2M-1$. In particular, when $M=2$, which is sufficiently expressive to model many curves while guaranteeing once differentiability, the resulting problem can be solved analytically using the classical Cardano's formula.

Thus, for the more general matrix Lie group case, the contribution of this paper is
a method to exploit $G$-polynomial curves in order to efficiently solve an \emph{approximate} version of the original problem by reducing it to a polynomial root-finding problem. We derive and analyze properties of this approximation that make it useful in practice. This approximation can either be used directly within the vector field methodology in \cite{bartelt2026constructive} or serve as a very good initial estimate for solving the original problem. In the latter case, we show, with extensive simulations, that we can outperform the Piyavskii–Shubert's approach by achieving faster computation while maintaining accuracy with respect to the true solution. Specifically, by taking the median speedup as the reference, the method provides computational speedups of values as high as $5 \times$  for a small  number of piecewise segments $K$. Speedups are observed up to $K=80$, with only 0.605\% of the total samples presenting an error on the parametrization $s \in [0,K]$ above 1.0\%. For the special case in which the group is $\textsl{SE}(3)$, useful in many robotics applications, practical formulas are presented. Furthermore, we also tested the vector field methodology in \cite{bartelt2026constructive} using this distance-computation technique on a real platform, a robotic manipulator (Fig.~\ref{fig:introduction}). Finally, the algorithm is implemented in the package [REMOVED FOR ANONYMITY], available online.

\section{Methodology}
\subsection{Preliminaries}

Let $G$ be an $n$-dimensional matrix Lie group and $\frak{g}$ its Lie algebra. The \emph{exponential map} $\exp: \frak{g} \to G$ is defined as
$\exp(A) = \sum_{i=0}^{\infty} \frac{A^i}{i!}$. Given $A \in \frak{g}$ and $H \in G$, let $\mathcal{T}_{H}G$ denote the tangent space of $G$ at the point $H$, and let $D\exp_{A}: \frak{g} \to \mathcal{T}_{\exp(A)}G$ be the linear map representing the differential of $\exp$ at the point $A$:
\begin{equation}
  D\exp_{A}[E] \triangleq \left. \frac{d}{dt} \exp(A+Et)  \right|_{t=0}.
\end{equation}

We define the operator \textls{vec}: $\frak{g} \to \mathbb{R}^n$ that maps the matrix $A$ to an $n$-dimensional vector representing its entries, and the inverse operator \textls{vec}$^{-1}: \mathbb{R}^n \to \frak{g}$. For a given group $G$, this operator \textsl{vec}$(\cdot)$ is not unique and depends on the choice of basis for the Lie algebra.

If $A$ is a square matrix, $\textsl{Tr}(A) \triangleq \sum_i A_{ii}$ denotes its trace, and $\|B\|$ denotes the \emph{Frobenius norm}, defined as
$\|B\| \triangleq \sqrt{\textsl{Tr}(B^{\top}B)}$.

Let $G^{\circ} \subseteq G$ be the set of matrices $H \in G$ in which no eigenvalue is a real negative number, and let $\frak{g}^{\circ} \subseteq \frak{g}$ be the set of matrices $A \in \frak{g}$ whose eigenvalues have imaginary parts $\omega$ satisfying $-\pi < \omega < \pi$ (the \emph{open strip}).

We adopt, with adaptations, the definition of a \emph{special exponential} matrix Lie group from \cite{bartelt2026constructive}. A matrix Lie group $G$ is said to be \emph{special exponential} if the map $\exp: \frak{g}^{\circ} \to G^{\circ}$ is \emph{bijective}. Not all groups satisfy this property, but important groups such as $\mathbb{R}^n$, $\textsl{SO}(n)$, and $\textsl{SE}(n)$ do, in addition to  less known but also robotic-relevant as the Gallilean Group $\textsl{SGal}(3)$ \cite{kelly2024making,mahony2025galilean}. For special exponential groups, one can define a map $\log: G^{\circ} \to \frak{g}^{\circ}$ such that $\exp(\log(H)) = H$ for any $H \in G^{\circ}$ and $\log(\exp(A)) = A$ for any $A \in \frak{g}^{\circ}$. This map is called the \emph{principal logarithm} \cite{higham2008functions}.

To make the exposition smoother, in this paper, whenever Lie Group/Algebra is being discussed, members of the Lie Group are represented by upper case \emph{consonants} (e.g., $H, C$...), whereas members of the Lie Algebra are represented by upper case \emph{vowels} (e.g., $A$, $E$, ...). The only exception to this rule is the matrix $I_{n \times n}$, which denotes a $n \times n$ identity matrix, which is the identity element of $G$ when it is a $n$ dimensional Matrix group.

\subsection{The Euclidean case}
\label{subs:euclidean}
To illustrate the methodology, we begin by considering a special case in which $G=\mathbb{R}^n$. We assume that the target curve $\mathcal{C} \subseteq \mathbb{R}^n$ is formed by $K$ segments $\mathcal{C}_k$, each of them being the image of an $M$-th order polynomial $p_k:[0,1]\to \mathbb{R}^n$. In this case, the optimization problem in \cite{AdrianoIEEE}
\begin{equation}
\label{eq:opadriano}
    D(h) = \min_{c \in \mathcal{C}} \|c-h\|
\end{equation}
\noindent can be reduced to solving the problem for each segment and then taking the minimum:
\begin{equation}
    D(h) = \min_{k \in \{1,\ldots,K\}} \min_{s \in [0,1]} \|p_{k}(s)-h\|.
\end{equation}

A key observation is that, for a polynomial $p_k(s)$, $\|p_{k}(s)-h\|^2$ is another polynomial, of degree $2M$, in $s$. Thus, finding $s$ that minimizes $\|p_{k}(s)-h\|$ on $[0,1]$ (equivalently, minimizing $\|p_{k}(s)-h\|^2$) reduces to finding the roots of a polynomial (the derivative of the original polynomial) of degree $2M-1$, together with checking the boundary points $s=0$ and $s=1$. Efficient algorithms exist for computing the roots of such polynomials when $M$ is not large. In particular, when $M=2$, the problem can be solved analytically since $2M-1=3$, allowing the use of Cardano's formula.

For general matrix (special exponential) Lie groups, we would like to obtain a similar formulation in which the minimizer can be computed efficiently. This will be discussed in the next subsection.

\subsection{Extension to the Lie Group case}
Let $G$ be a Special Exponential matrix Lie group and $\frak{g}$ its Lie algebra. Let $\mathcal{C}$ be a closed, one-dimensional, once-differentiable curve in $G$. The counterpart of \eqref{eq:opadriano} in \cite{bartelt2026constructive}, which generalizes \cite{AdrianoIEEE}, is the following,  for $H \in G$:
\begin{equation}
\label{eq:D}
    D(H) \triangleq \min_{C \in \mathcal{C}} \|\log(H^{-1}C)\|.
\end{equation}
We start by considering a special class of curves $\mathcal{C}$.

\begin{definition}
\textbf{(G-Polynomial)} Curve $\mathcal{C}$ is called \emph{$G$-polynomial} if it is the union of $K$ (finite) segments $\mathcal{C}_k$, each of which is the image of a function $P_k:[0,1]\to G$ of the form
\begin{equation}
\label{eq:paramP}
    P_k(s) \triangleq P_{k0}\exp\underbrace{\left( \sum_{m=1}^M E_{km}s^m \right)}_{\triangleq E_k(s)} = P_{k0}\exp\big(E_k(s)\big) ,
\end{equation}
\noindent where $E_{km} \in \frak{g}$ and $P_{k0} \in G$, and such that the values and derivatives match at the endpoints, i.e.,
\begin{eqnarray}
    P_k(1) = P_{k+1}(0) \ , \ P'_k(1) = P'_{k+1}(0)
\end{eqnarray}
\noindent for $k=1,\ldots,K-1$. If the curve is closed, it must also hold that $P_K(1) = P_1(0)$ and $P'_K(1) = P'_1(0)$. %
\end{definition}

The concept of $G$-Polynomial is not new. Many works propose similar,  equivalent, or even more general, curves. This is often the case in the setting of \emph{interpolation} in Lie Groups, because such a definition is amenable to interpolating points $C_k \in G$ while matching derivatives.  One recent example is \cite{mueller2025poe}, for general Lie Groups. In \cite{watterson2016smooth}, a very close definition was made for the special case in which $G=\textsl{SE}(3)$. Nevertheless, this  exact definition is adequate for our purpose.

Although having higher-order derivatives for $\mathcal{C}$ may be desirable, the vector field in \cite{bartelt2026constructive}, where $D$ is used, requires only that the first derivative be continuous. This motivates the definition of a $G$-polynomial curve with this minimal requirement.

The expression in \eqref{eq:paramP} can be viewed as the counterpart of a polynomial in Lie groups. Indeed, when $G = \mathbb{R}^n$, where $\mathbb{R}^n$ is represented by homogeneous transformation matrices with identity rotation, $P_k(s)$ becomes a standard polynomial in $s$, since in this case $\exp(U) = I+U$ for any $U$. In Section~\ref{subs:approximateC}, we will show how to construct a $G$-polynomial curve $\mathcal{C}$ from samples of $\mathcal{C}$ (a $G$-polynomial interpolation) by solving a linear system.

When $\mathcal{C}$ is $G$-polynomial, we approximate $\hat{D}(H,C) \triangleq \|\log(H^{-1}C)\|$ with an expression easier to optimize. When $G=\mathbb{R}^n$, this reduces to the situation discussed in Subsection \ref{subs:euclidean}, and the approximation becomes exact. To do so, we study a specific function $f$ and its approximation $\tilde{f}$.

\subsection{The functions $f$ and $\tilde{f}$}
Consider the set
\begin{equation}
    (\frak{gg})^{\circ} \triangleq \{(A,E) \in \frak{g} \times \frak{g} \ | \ \exp(-A)\exp(E) \in G^{\circ}\}
\end{equation}
\noindent and the function $f:(\frak{gg})^{\circ} \to \frak{g}$:
\begin{equation}
    f(A,E) \triangleq \log\big(\exp(-A)\exp(E)\big).
\end{equation}

Consider $A \in \frak{g}$ such that $(A,0) \in (\frak{gg})^{\circ}$. One can easily see that this is equivalent to $-A \in \frak{g}^{\circ}$. Since $\frak{g}$ is a vector space, this is also equivalent to $A \in \frak{g}^{\circ}$. The function $f:(\frak{gg})^{\circ} \to \frak{g}$ is smooth on its domain, and we can write a first-order expansion around $E=0$ provided that $A \in \frak{g}^{\circ}$:
\begin{equation}
\label{eq:f}
  f(A,E) = -A + \mathbb{L}_{A}[E] + \mathcal{O}(\|[E,A]\|).
\end{equation}

\noindent Here $[E,A] \triangleq EA-AE$ denotes the \emph{commutator}, $f(A,0)=-A$ is the zeroth-order term, $\mathbb{L}_{A}[E]$ is the first-order term, and $\mathcal{O}(\|[E,A]\|)$ represents higher-order terms that vanish as $[E,A] \rightarrow 0$ (which is a weaker condition than $\|E\| \rightarrow 0$). %

\begin{definition}
Given $A \in \frak{g}^{\circ}$, we define the linear operator $\mathbb{L}_A:\frak{g}\to\frak{g}$ by
\begin{equation}
\mathbb{L}_A[E] \triangleq \left.\frac{d}{dt} f(A,Et)\right|_{t=0}
= \left.\frac{d}{dt}\log\big(\exp(-A)\exp(tE)\big)\right|_{t=0}.
\end{equation}
\end{definition}

The computation of this term in the special and important case $G=\textsl{SE}(3)$ will be discussed in Subsection \ref{subs:computinglA}. Nevertheless, defining $\tilde{f}:\frak{g}^{\circ}\times\frak{g}\to\frak{g}$ as
\begin{equation}
    \tilde{f}(A,E) \triangleq -A + \mathbb{L}_{A}[E],
\end{equation}
\noindent we have $f(A,E) \approx \tilde{f}(A,E)$ provided that $[E,A]$ is small (see Subsection \ref{subs:validity} for a discussion on the validity of this approximation).
We now establish several properties of the map $\mathbb{L}_A$ that will be useful for our purposes.

\begin{proposition}
\label{ref:propmain}
For $A \in \frak{g}^{\circ}$, $E_1,E_2,E \in \frak{g}$ and $\alpha_1,\alpha_2 \in \mathbb{R}$, the following properties of $\mathbb{L}_A$ hold:
\begin{itemize}
    \item (1): $\mathbb{L}_A[\alpha_1E_1+\alpha_2E_2] = \alpha_1 \mathbb{L}_A[E_1] + \alpha_2 \mathbb{L}_A[E_2]$;
    \item (2): $\mathbb{L}_A[A] = A$;
    \item (3): $\mathbb{L}_A[E] = A$ implies $E=A$.
\end{itemize}
\end{proposition}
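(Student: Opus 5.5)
The three properties come from three different facts: linearity of a derivative, the commuting case of $f$, and invertibility of the differential of $\log$ on $G^{\circ}$.

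\emph{Property (1).} By definition, $\mathbb{L}_A[E]$ is the directional derivative at $E=0$ of the smooth map $E \mapsto f(A,E)$, taken in direction $E$. Smoothness holds because $A \in \frak{g}^{\circ}$, so $(A,0)$ lies in the open set $(\frak{gg})^{\circ}$ and $\log$ is smooth on $G^{\circ}$. Hence $\mathbb{L}_A[E]$ is the Fréchet differential of $f(A,\cdot)$ at $0$ applied to $E$, which is linear in $E$.

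\emph{Property (2).} Take $E = A$, i.e.\ evaluate along $tA$. Since $A$ and $tA$ commute,
\[
\exp(-A)\exp(tA) = \exp\big((t-1)A\big).
\]
For $t$ near $0$ (in fact for $t \in [0,1)$), the eigenvalues of $(t-1)A$ have imaginary parts $(t-1)\omega$ with $|(t-1)\omega| \le |\omega| < \pi$. So $(t-1)A \in \frak{g}^{\circ}$, and the principal logarithm gives $\log(\exp((t-1)A)) = (t-1)A$. Differentiating at $t=0$ yields $\mathbb{L}_A[A] = A$.

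\emph{Property (3).} By (1) and (2), $\mathbb{L}_A[E]=A$ gives $\mathbb{L}_A[E-A]=0$. It therefore suffices to show that $\mathbb{L}_A$ is injective, and this is the main step.

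By the chain rule,
\[
\mathbb{L}_A[E] = D\log_{\exp(-A)}\big[\exp(-A)E\big].
\]
Left multiplication by $\exp(-A)$ is an isomorphism from $\frak{g}$ onto $\mathcal{T}_{\exp(-A)}G$, so injectivity of $\mathbb{L}_A$ reduces to injectivity of $D\log$ at $\exp(-A)$. Since $\log$ is the inverse of $\exp$ restricted to $\frak{g}^{\circ}$, we have $D\log_{\exp(-A)} = (D\exp_{-A})^{-1}$ whenever $D\exp_{-A}$ is invertible. So it remains to show that $D\exp_{-A}$ is nonsingular for $-A \in \frak{g}^{\circ}$.

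For this, use the standard formula
\[
D\exp_{X}[Y] = \exp(X)\,\frac{1-e^{-\mathrm{ad}_X}}{\mathrm{ad}_X}[Y].
\]
The eigenvalues of $\mathrm{ad}_X$ are the differences $\lambda_i-\lambda_j$ of eigenvalues of $X$. The operator $\frac{1-e^{-\mathrm{ad}_X}}{\mathrm{ad}_X}$ is singular exactly when some difference $\lambda_i-\lambda_j$ equals $2\pi i k$ with $k\neq 0$. On $\frak{g}^{\circ}$ every eigenvalue has imaginary part in $(-\pi,\pi)$, so all differences have imaginary part of modulus $<2\pi$, and no such $k$ exists. Hence $D\exp_{-A}$ is invertible. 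If $\mathrm{ad}$ is computed in $\mathfrak{gl}$ instead of on $\frak{g}$, injectivity on $\frak{g}$ still follows by restriction.

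With this, (3) follows: $\mathbb{L}_A[E-A]=0$ forces $E-A=0$, i.e.\ $E=A$.
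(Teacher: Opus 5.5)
Your proof is correct. Parts (1) and (2) are the same as the paper's; the difference is in (3).

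\textbf{How the paper does (3).} It also reduces (3) to a trivial kernel. It differentiates the identity $\exp(f(A,tE))=\exp(-A)\exp(tE)$ at $t=0$, which gives $D\exp_{-A}\bigl[\mathbb{L}_A[E]\bigr]=\exp(-A)E$. If $\mathbb{L}_A[E]=0$, the left side vanishes by linearity, so $\exp(-A)E=0$ and hence $E=0$. Put differently, $D\exp_{-A}\circ\mathbb{L}_A$ is the injective map $E\mapsto\exp(-A)E$, so $\mathbb{L}_A$ is injective. Nonsingularity of $D\exp_{-A}$ is never needed.

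\textbf{How you do (3).} You pass through $D\log_{\exp(-A)}$ and prove that $D\exp_{-A}$ is invertible, using the formula $\exp(X)\frac{1-e^{-\mathrm{ad}_X}}{\mathrm{ad}_X}$ and the spectrum of $\mathrm{ad}_X$. This is sound, including the spectral-mapping step and the restriction from $\mathfrak{gl}$ to $\mathfrak{g}$.

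\textbf{What each route buys.} Your lemma gives an intrinsic reason why $D\exp$ is nonsingular on $\mathfrak{g}^{\circ}$. It shows exactly where the strip condition enters: no difference of eigenvalues lies in $2\pi i(\mathbb{Z}\setminus\{0\})$. Combined with the inverse function theorem and injectivity of $\exp$ on $\mathfrak{g}^{\circ}$, it would let you derive smoothness of $\log$ near $\exp(-A)$ rather than import it. It also covers Proposition~\ref{prop:prop2} in the same stroke.

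However, as written you already assume $\log$ is differentiable at $\exp(-A)$, both in (1) and in your chain-rule formula. Given that, differentiating $\exp\circ\log=\mathrm{id}$ at $\exp(-A)$ yields $D\exp_{-A}\circ D\log_{\exp(-A)}=\mathrm{id}$. So $D\log_{\exp(-A)}$ is injective with no spectral analysis at all, which is the paper's argument in different notation.

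\textbf{A small precision in (2).} Your bound $|(t-1)\omega|\le|\omega|$ only covers $t\ge 0$. For the two-sided derivative at $t=0$, you need the strict inequality $|\omega|<\pi$ (finitely many eigenvalues) to keep $(t-1)A\in\mathfrak{g}^{\circ}$ on an open interval around $0$.
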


\begin{proof}
The first property follows from the fact that $\mathbb{L}_A$ is the first-order term in the expansion of $f(A,E)$ around $E=0$, and therefore it must be linear in $E$.

The second property follows directly from the definition:
\begin{equation}
\left.\frac{d}{dt}\log\big(\exp(-A)\exp(tA)\big)\right|_{t=0}
= \left.\frac{d}{dt}\big((t-1)A\big)\right|_{t=0} = A.
\end{equation}
Here we used the identity $\exp(-A)\exp(tA)=\exp((t-1)A)$ and the fact that $\log(\exp(U))=U$ whenever $U\in\frak{g}^{\circ}$. Note that if $A\in\frak{g}^{\circ}$, then $(t-1)A\in\frak{g}^{\circ}$ for any $t<1$, and since the limit is taken as $t\rightarrow0$, this condition is satisfied.

For the third property, we show that the map $\mathbb{L}_A$ has a trivial kernel, i.e., $\mathbb{L}_A[E]=0$ implies $E=0$. Together with properties (1) and (2), this implies (3).

To prove that the kernel is trivial, note that $\exp(f(A,Et))=\exp(-A)\exp(Et)$. Taking the derivative with respect to $t$ and evaluating at $t=0$ on both sides yields
$D\exp_{-A}[\mathbb{L}_A[E]]=\exp(-A)E$. Suppose that $\mathbb{L}_A[E]=0$. Then $D\exp_{-A}[0] = \exp(-A)E$. Since any differential is linear, $D\exp_{-A}[0]=0$, and therefore $\exp(-A)E=0$. Because $\exp(-A)$ is invertible, this implies $E=0$, proving that the kernel is trivial.
\end{proof}

To compute $\mathbb{L}_A$, we need another definition. Note that, for $A \in \frak{g}$, the map $\exp(-A)D\exp_A: \frak{g} \to \frak{g}$ is linear. %

\begin{definition} \label{def:leftjac}
Assume that $G$ is $n$-dimensional and let $A \in \frak{g}$. Consider a vectorization procedure \textsl{vec}. Define $\mathbb{I}_A \in \mathbb{R}^{n \times n}$ as the matrix satisfying
\begin{equation}
    \textsl{vec}\big(\exp(-A)D\exp_A[E]\big) = \mathbb{I}_A\textsl{vec}(E)
\end{equation}
\noindent for any $E \in \frak{g}$. Explicitly, let $e_i$ be the $i^{th}$ column of the identity matrix $I_{n \times n}$. Then the $i^{th}$ column of $\mathbb{I}_A$ is $\textsl{vec}\big(\exp(-A)D\exp_A[\textsl{vec}^{-1}(e_i)]\big)$.
\end{definition}

The matrix $\mathbb{I}_A$ corresponds to what is called the \emph{left Jacobian} of the Lie group exponential. Note that the definition of $\mathbb{I}_A$ depends implicitly on the choice of the vectorization operator \textsl{vec}.

\begin{proposition} \label{prop:prop2}
If $A \in \frak{g}^{\circ}$, then $\mathbb{I}_A$ is invertible.
\end{proposition}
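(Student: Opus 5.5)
Since $\textsl{vec}$ is a linear isomorphism between $\frak{g}$ and $\mathbb{R}^n$, $\mathbb{I}_A$ is invertible if and only if the linear map $E \mapsto \exp(-A)D\exp_A[E]$ from $\frak{g}$ to $\frak{g}$ has trivial kernel. Left multiplication by the invertible matrix $\exp(-A)$ does not change the kernel. So it suffices to show that $D\exp_A$ is injective whenever $A \in \frak{g}^{\circ}$; equivalently, that $\exp$ is a local diffeomorphism at every point of the open strip.

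The cleanest route uses the special exponential structure, mirroring the argument for property (3) of Proposition~\ref{ref:propmain}. For $A \in \frak{g}^{\circ}$ we have $\exp(A) \in G^{\circ}$, and since $\frak{g}^{\circ}$ is open, $A + tE \in \frak{g}^{\circ}$ for small $|t|$. The principal logarithm therefore gives
\begin{equation}
\log\big(\exp(A+tE)\big) = A + tE
\end{equation}
for small $|t|$. The principal logarithm is smooth (analytic) on $G^{\circ}$, the set of matrices with no eigenvalues on the closed negative real axis; this is the standard fact from \cite{higham2008functions}, and it also underlies the smoothness of $f$ asserted earlier. Differentiating at $t=0$ with the chain rule yields
\begin{equation}
D\log_{\exp(A)}\big[D\exp_A[E]\big] = E.
\end{equation}
Hence if $D\exp_A[E]=0$, then $E = D\log_{\exp(A)}[0] = 0$, so the kernel is trivial and $\mathbb{I}_A$ is invertible.

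The main point needing care is the smoothness of $\log$ near $\exp(A)$, so that the chain rule applies. If one prefers not to rely on it, I would instead use the classical formula
\begin{equation}
\exp(-A)D\exp_A[E] = \sum_{k\ge 0}\frac{(-1)^k}{(k+1)!}\,\mathrm{ad}_A^k(E) = \phi(-\mathrm{ad}_A)E, \qquad \phi(z) = \frac{1-e^{-z}}{z}.
\end{equation}
The eigenvalues of $\mathrm{ad}_A$ are the differences $\lambda_i-\lambda_j$ of eigenvalues of $A$. The zeros of $\phi$ are $z = 2\pi i k$ with $k \neq 0$. Under the strip condition the imaginary parts of $\lambda_i - \lambda_j$ lie in $(-2\pi, 2\pi)$, so none of these zeros is attained, and $\phi(-\mathrm{ad}_A)$ is invertible on $\frak{g}$ (it maps $\frak{g}$ to itself). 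I would present this spectral argument as a backup.
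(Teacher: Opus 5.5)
Your main argument is the same as the paper's proof. You reduce to injectivity of $D\exp_A$, differentiate $\log(\exp(A+tE)) = A+tE$ at $t=0$ to get $D\log_{\exp(A)}[D\exp_A[E]]=E$, and conclude that the kernel is trivial. One small slip in your backup: the series $\sum_{k\ge0}\frac{(-1)^k}{(k+1)!}\mathrm{ad}_A^k$ equals $\phi(\mathrm{ad}_A)$, not $\phi(-\mathrm{ad}_A)$, but this is harmless because the zero set $\{2\pi i k : k\neq 0\}$ of $\phi$ is symmetric under $z\mapsto -z$.
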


\begin{proof}
It suffices to prove that the linear map $D\exp_A$ is invertible when $A \in \frak{g}^{\circ}$. This is equivalent to showing that it has a trivial kernel, i.e., if $A \in \frak{g}^{\circ}$ and $D\exp_A[E]=0$, then $E=0$.

Let $E \in \frak{g}$. For $t$ sufficiently close to $0$, we have $A+Et \in \frak{g}^{\circ}$ and hence $\exp(A+Et) \in G^{\circ}$. The map $\exp:\frak{g}^{\circ}\to G^{\circ}$ is injective \cite{higham2008functions}, and therefore $\log(\exp(A+Et))=A+Et$ for $t$ sufficiently close to $0$. Differentiating both sides with respect to $t$ and evaluating at $t=0$, and using the differentiability of the logarithm in this region \cite{higham2008functions}, we obtain
$D\log_{\exp(A)}[D\exp_A[E]] = E$.

Now suppose that $D\exp_A[E]=0$. Then $D\log_{\exp(A)}[0]=0=E$, where we used the linearity of $D\log$. %
\end{proof}

With this result, we can compute $\mathbb{L}_A$.

\begin{proposition} \label{prop:prop3}
For $A \in \frak{g}^{\circ}$ and $E \in \frak{g}$,
\begin{equation}
    \mathbb{L}_A[E] = \textsl{vec}^{-1}\Big(\mathbb{I}_{-A}^{-1}\textsl{vec}(E)\Big).
\end{equation}
\end{proposition}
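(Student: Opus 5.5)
The plan is to differentiate the defining identity of $f$ and then translate the resulting relation into the language of left Jacobians. We start, as in the proof of Proposition~\ref{ref:propmain}, from $\exp(f(A,Et)) = \exp(-A)\exp(Et)$, valid for $t$ near $0$ because $(A,0)\in(\frak{gg})^{\circ}$ and the domain is open. Differentiating at $t=0$ and using $f(A,0)=-A$ and $\left.\frac{d}{dt}f(A,Et)\right|_{t=0}=\mathbb{L}_A[E]$ gives
\begin{equation*}
D\exp_{-A}\big[\mathbb{L}_A[E]\big] = \exp(-A)\,E .
\end{equation*}
Here the chain rule applies because $f$ is smooth and $\exp$ is smooth, and the derivative of the right-hand side is clear since $\left.\frac{d}{dt}\exp(Et)\right|_{t=0}=E$.

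Next, left-multiply both sides by $\exp(A)=\exp(-(-A))$. This gives
\begin{equation*}
\exp(-(-A))\,D\exp_{-A}\big[\mathbb{L}_A[E]\big] = E .
\end{equation*}
The left side is exactly the linear map appearing in Definition~\ref{def:leftjac} with base point $-A$, applied to $\mathbb{L}_A[E]$. Applying \textsl{vec} and using Definition~\ref{def:leftjac} (with $-A$ in place of $A$), we obtain
\begin{equation*}
\mathbb{I}_{-A}\,\textsl{vec}\big(\mathbb{L}_A[E]\big) = \textsl{vec}(E).
\end{equation*}
For this step we should check that $\mathbb{L}_A[E]\in\frak{g}$, which holds because $f$ takes values in $\frak{g}$, so that \textsl{vec} is defined on it.

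Finally, we invert $\mathbb{I}_{-A}$. Since $\frak{g}^{\circ}$ is symmetric under negation (its eigenvalue strip condition is symmetric), $A\in\frak{g}^{\circ}$ implies $-A\in\frak{g}^{\circ}$, and Proposition~\ref{prop:prop2} gives that $\mathbb{I}_{-A}$ is invertible. Hence $\textsl{vec}(\mathbb{L}_A[E]) = \mathbb{I}_{-A}^{-1}\textsl{vec}(E)$, and applying $\textsl{vec}^{-1}$ yields the claim.

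The only delicate point is justifying the differentiation, namely that $\exp(-A)\exp(Et)\in G^{\circ}$ for small $t$ and that $\log$ is differentiable there, so that $f(A,Et)$ is differentiable in $t$. The paper already asserts that $f$ is smooth on its domain, and openness of $G^{\circ}$ together with continuity handles the first point. Everything else is bookkeeping with the sign of the base point.
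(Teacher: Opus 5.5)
Your proof is correct and matches the paper's argument step for step. You differentiate $\exp(f(A,Et))=\exp(-A)\exp(tE)$ at $t=0$, left-multiply by $\exp(A)$, recognize $\mathbb{I}_{-A}$ through Definition~\ref{def:leftjac}, and invert using Proposition~\ref{prop:prop2} with $-A\in\frak{g}^{\circ}$; your remarks on the openness of the domain and the differentiability of $f$ just make explicit what the paper takes for granted.
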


\begin{proof}
Note that
\begin{equation}
    \exp(f(A,Et)) = \exp(-A)\exp(tE).
\end{equation}
Taking the derivative with respect to $t$ on both sides and evaluating at $t=0$, we obtain
\begin{equation}
    D\exp_{-A}\big(\mathbb{L}_A[E]\big) = \exp(-A)E.
\end{equation}
Left-multiplying both sides by $\exp(A)$ and applying \textsl{vec}, and using the definition of $\mathbb{I}_{-A}$, yields
\begin{equation}
    \mathbb{I}_{-A}\textsl{vec}\big(\mathbb{L}_A[E]\big) = \textsl{vec}(E).
\end{equation}
From Proposition \ref{prop:prop2}, the matrix $\mathbb{I}_{-A}$ is invertible when $A \in \frak{g}^{\circ}$ (note that $A \in \frak{g}^{\circ}$ implies $-A \in \frak{g}^{\circ}$). The desired result follows by multiplying both sides on the left by the inverse of $\mathbb{I}_{-A}$ and applying the inverse vectorization operator.
\end{proof}

\subsection{Approximating the distance}
Consider a curve $\mathcal{C}$ that is $G$-polynomial. Then
\begin{align}
  D(H) & =  \min_{C \in \mathcal{C}} \|\log(H^{-1}C)\|  \nonumber \\
  & =   \min_{k \in \{1,...,K\}}\min_{s \in [0,1]} \|\log\big(H^{-1}P_k(s)\big)\| \nonumber \\
  & =   \min_{k \in \{1,...,K\}}\min_{s \in [0,1]} \|\log\big(H^{-1}P_{k0}P_{k0}^{-1}P_k(s)\big)\|.  \nonumber
\end{align}
\noindent Here we multiplied by $P_{k0}P_{k0}^{-1}$, which is the identity matrix, for reasons that will become clear shortly.

Assume that $P_{k0}^{-1}H \in G^{\circ}$. Furthermore, assume that for all $k$ and all $s \in [0,1]$, $E_k(s) \in \frak{g}^{\circ}$. These assumptions will be discussed in Subsection \ref{subs:validity}. Under these conditions, $\log(P_{k0}^{-1}P_k(s)) = E_k(s)$ and we can define $A_k \triangleq \log(P_{k0}^{-1}H) \in \frak{g}^{\circ}$. Using function $f$ in \eqref{eq:f}, we obtain
\begin{eqnarray}
  D(H) =   \min_{k \in \{1,...,K\}}\min_{s \in [0,1]} \|f\big(A_k,E_k(s)\big)\|.
\end{eqnarray}

We now use the approximation $f(A,E) \approx \tilde{f}(A,E)$, assuming that the approximation is sufficiently accurate (see Subsection \ref{subs:validity}). The quality of this approximation depends, among other factors, on $E_k(s)$ being ``small''. This is reasonable if the points $C_k$ and $C_{k+1}$ are sufficiently close to each other, since $E_k$ represents a type of displacement between these two elements. This can be controlled by choosing sufficiently close samples when constructing the $G$-polynomial curve $\mathcal{C}$. This flexibility was made possible by the multiplication by $P_{k0}P_{k0}^{-1}$, which allowed us to separate $E_k$, the \emph{displacement}, from $P_{k0}^{-1}H$.

Thus, we must solve the optimization problem
\begin{equation}
\label{eq:minp}
    \min_{s \in [0,1]} \|\tilde{f}(A_k,E_k(s))\|
\end{equation}
\noindent for $k \in \{1,...,K\}$. Since $\tilde{f}(A,E)$ is affine in $E$ (a consequence of property (1) of Proposition \ref{ref:propmain}) and $E_k(s)$ is a polynomial of degree $M$ in $s$, the function $\|\tilde{f}(A_k,E_k(s))\|$ is the square root of a polynomial of degree $2M$. Therefore, the minimization can be performed by finding the roots of a polynomial of degree $2M-1$, selecting those that lie in the interval $[0,1]$, evaluating the polynomial at these points, and also at the boundaries $s=0$ and $s=1$.
With this, given a $G$-polynomial curve $\mathcal{C}$, we define an \emph{approximate} version of $D(H)$, namely the function $\tilde{D}: G \to \mathbb{R}$ defined as
\begin{equation}
\label{eq:Dtilde}
    \tilde{D}(H)  \triangleq \min_{k \in \{1,...,K\}} \min_{s \in [0,1]} \|\tilde{f}(A_k,E_k(s))\|.
\end{equation}

To ensure that this approximation behaves as a proper distance-like metric, we have the following result.

\begin{proposition} \label{prop:prop4}
The expression $\tilde{D}(H)$ is nonnegative, and it is equal to zero if and only if $H \in \mathcal{C}$.
\end{proposition}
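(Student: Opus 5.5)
Nonnegativity is immediate, since $\tilde{D}(H)$ is a minimum of Frobenius norms. The substance is the equivalence $\tilde{D}(H)=0 \iff H\in\mathcal{C}$. Throughout I work under the standing assumptions of this subsection: $P_{k0}^{-1}H \in G^{\circ}$, so $A_k=\log(P_{k0}^{-1}H)\in\frak{g}^{\circ}$ is well defined, and $E_k(s)\in\frak{g}^{\circ}$ for all $k$ and $s\in[0,1]$. The key tool will be property (3) of Proposition \ref{ref:propmain}, which turns the vanishing of the approximation into an exact identity.

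For the direction $\tilde{D}(H)=0 \Rightarrow H\in\mathcal{C}$, I first note that the inner minimum over the compact interval $[0,1]$ is attained, because $s\mapsto\|\tilde{f}(A_k,E_k(s))\|$ is continuous (the square root of a polynomial). The outer minimum is over the finite set $\{1,\ldots,K\}$. Hence there exist $k$ and $s^*\in[0,1]$ with $\tilde{f}(A_k,E_k(s^*))=0$, that is, $\mathbb{L}_{A_k}[E_k(s^*)]=A_k$. By property (3) this gives $E_k(s^*)=A_k$. Exponentiating and using $\exp(\log(U))=U$ on $G^{\circ}$, I get $P_{k0}^{-1}H=\exp(A_k)=\exp(E_k(s^*))$. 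Therefore $H=P_{k0}\exp(E_k(s^*))=P_k(s^*)\in\mathcal{C}_k\subseteq\mathcal{C}$.

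For the converse, suppose $H\in\mathcal{C}$, so $H=P_k(s^*)$ for some $k$ and some $s^*\in[0,1]$. Then $P_{k0}^{-1}H=\exp(E_k(s^*))$. Since $E_k(s^*)\in\frak{g}^{\circ}$, this element lies in $G^{\circ}$, and the principal logarithm gives $A_k=\log(\exp(E_k(s^*)))=E_k(s^*)$. Property (2) then yields $\tilde{f}(A_k,E_k(s^*))=-A_k+\mathbb{L}_{A_k}[A_k]=0$, so $\tilde{D}(H)\le 0$. Combined with nonnegativity, this gives $\tilde{D}(H)=0$.

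The main subtlety I expect is bookkeeping of the domain assumptions rather than any computation. The converse needs $\log(\exp(E_k(s^*)))=E_k(s^*)$, which is exactly where $E_k(s)\in\frak{g}^{\circ}$ is used. Both directions need $A_k\in\frak{g}^{\circ}$ for every $k$ so that $\mathbb{L}_{A_k}$ and Proposition \ref{ref:propmain} apply. I will state explicitly that the proposition holds under the standing assumptions, namely $P_{k0}^{-1}H\in G^{\circ}$ and $E_k(s)\in\frak{g}^{\circ}$ for all $k$ and $s$.
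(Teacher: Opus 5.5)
Your proof is correct and follows the same route as the paper: properties (2) and (3) of Proposition~\ref{ref:propmain} turn $\tilde{f}(A_k,E_k(s))=0$ into $E_k(s)=A_k$, and exponentiating this gives $H=P_k(s)\in\mathcal{C}$. You are more careful than the paper, which leaves two steps implicit: that the minimum defining $\tilde{D}$ is attained, and the converse direction, which rests on $\log(\exp(E_k(s)))=E_k(s)$ for $E_k(s)\in\frak{g}^{\circ}$.
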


\begin{proof}
The result of the minimization in \eqref{eq:minp} is clearly nonnegative due to the Frobenius norm.
Furthermore, from properties (2) and (3) in Proposition \ref{ref:propmain}, the expression $\|\tilde{f}(A_k,E_k(s))\|$ is zero if and only if there exist $k \in \{1,...,K\}$ and $s \in [0,1]$ such that $E_k(s) = A_k$. In this case,
$P_k(s) = P_{k0}\exp(E_k(s)) = H$, which implies that $H$ belongs to one of the segments $\mathcal{C}_k$, and therefore $H \in \mathcal{C}$.
\end{proof}
\subsection{Algorithm for computing $D$ from the approximation}
\label{subs:alg}
For simplicity, we assume that the curve is a closed $G$-polynomial curve with $K$ segments and polynomials of order $M=2$. Given such a curve, we want to compute $C^*$ that minimizes $D$ in \eqref{eq:D} for a given $H$ by means of the proposed approximation.
Thus, we consider a parametrization $P(s)$ of $\mathcal{C}$ obtained from the piecewise $G$-polynomials $P_k$. That is, a function $P:[0,K] \to\mathcal{C}$ defined by
\begin{equation}
    P(s) = P_{\textsl{ceil}(s)}\big(\textsl{frac}(s)\big)
\end{equation}
\noindent for $0 < s \leq K$ and $P(0)=C_1$, where $\textsl{ceil}$ denotes the ceiling operator and $\textsl{frac}$ the fractional part. Then the problem in~\eqref{eq:D} of finding $C^*$ can be recast as finding $s^*\in[0,K]$ that minimizes $q(s)\triangleq\|\log(H^{-1}P(s))\|^2$.

The first part of the algorithm is to solve an \emph{approximate} version of this problem by finding minimizers $k$ and $s$ of $\tilde{D}$ in \eqref{eq:Dtilde}. Thus, assuming that we have the initial points $C_k$ and the coefficients $E_{k1},E_{k2}$ for each segment, we perform the following steps for $k\in\{1,\ldots,K\}$:

\begin{itemize}
    \item Compute $A_k=\log(C_k^{-1}H)$ (the formula for the principal logarithm in $\textsl{SE}(3)$ can be found in \cite{barfoot2017state});
    \item Compute $U_{k1}=\mathbb{L}_{A_k}[E_{k1}]$ and $U_{k2}=\mathbb{L}_{A_k}[E_{k2}]$ (see Subsection \ref{subs:computinglA} for the $\textsl{SE}(3)$ case);
    \item Compute the coefficients of the polynomial
    \begin{equation}
    q_k(s)\triangleq\|-A_k+U_{k1}s+U_{k2}s^2\|^2
    \end{equation}
    \noindent which can be written as $q_k(s)=b_{k0}+b_{k1}s+b_{k2}s^2+b_{k3}s^3+b_{k4}s^4$, with 
    \begin{eqnarray}
       && b_{k0}=\|A_k\|^2 \ , \ b_{k1}=-2\textsl{Tr}(A_k^{\top}U_{k1}) \nonumber \\
        && b_{k2}{=}-2\textsl{Tr}(A_k^{\top}U_{k2}){+}\|U_{k1}\|^2 \ , \ \nonumber \\
        && b_{k3}{=}2\textsl{Tr}(U_{k1}^{\top}U_{k2}) \ , \ 
         b_{k4}=\|U_{k2}\|^2.
    \end{eqnarray}
    Note that if $X,Y\in\mathbb{R}^{n \times n}$, $\textsl{Tr}(X^{\top}Y)$ can be computed in $\mathcal{O}(n^2)$ operations, since $\textsl{Tr}(X^{\top}Y) = \sum_{ij} X_{ij}Y_{ij}$.
    \item Compute the roots of $q_k'(s)=4b_{k4}s^3+3b_{k3}s^2+2b_{k2}s+b_{k1}$. Consider those that lie in the interval $s\in[0,1]$. Evaluate $q_k(s)$ at these points and also at the boundaries $s=0$ and $s=1$. Select the value $s$ that yields the smallest result, which we denote by $\tilde{s}^*_k$;
    \item Return the triple $\{k,\tilde{s}^*_k,q_k(\tilde{s}^*_k)\}$.
\end{itemize}

The value of $k$ that yields the smallest $q_k(\tilde{s}^*_k)$ is then selected, together with its corresponding $\tilde{s}^*_k$. This value is mapped to the interval $[0,K]$, which is the domain of the function $q(s)$ that we ultimately want to minimize, by computing $\tilde{s}^*=k-1+\tilde{s}^*_k$. This provides a first estimate of the true minimizer $s^*$ of $q$, which is expected to be close to the actual solution (see Subsection \ref{subs:validity}).

A second step is subsequently performed to refine this estimate.
A simple brute-force strategy is adopted: $q'(\tilde{s}^*)$ is approximated using central differences and, according to its sign, the search proceeds from $\tilde{s}^*$ in the opposite direction using steps of size $\epsilon$, where $\epsilon$ is a small number, until no further reduction in $q(s)$ is achieved.

\section{Practical Considerations}
\subsection{Validity of the approximation}
\label{subs:validity}

Several assumptions contribute to the validity of the approximation from $D(H)$ in \eqref{eq:D} to $\tilde{D}(H)$ in \eqref{eq:Dtilde}. First, \eqref{eq:f} shows that the approximation depends on the commutator $[A,E]$ being small. In our case, this condition should hold for the pair $(k,s_k)$ that minimizes \eqref{eq:Dtilde}, requiring the commutator $[A_k,E_k(s_k)]$ to be small, where $A_k=\log(C_k^{-1}H)$ and $E_k(s)=\sum_{m=1}^M E_{km}s^m$.

Three factors contribute to this condition. Each of them, individually, improves the accuracy of the approximation:

\begin{itemize}
    \item In Abelian groups (e.g., $\mathbb{R}^n$), commutators vanish trivially. Hence, the approximation is exact, and $\mathbb{L}_A[E]=E$. In particular, when the method is applied to the setting of~\cite{AdrianoIEEE}, it yields the exact solution.
    
    \item As the points $C_k$ become closer to each other, the corresponding segments become shorter. Consequently, the matrices $E_{km}$ tend to become smaller, and therefore $E_k(s)$ becomes small for \emph{any $s\in[0,1]$}, reducing the magnitude of the commutator.
    
    \item As the point $H$ approaches $\mathcal{C}$, for the values $k$ and $s_k$ that minimize \eqref{eq:Dtilde}, we have $A_k \approx E_k(s_k)$, which reduces the commutator. In fact, according to Proposition \ref{prop:prop4}, the approximation becomes exact when $H \in \mathcal{C}$.
\end{itemize}

Throughout the paper, we also assumed that for all $k$, $H^{-1}C_k \in G^{\circ}$. In many important groups, such as $\textsl{SE}(m)$ and $\textsl{SO}(m)$, the set of \emph{problematic points} $H \in G$ --- that is, those for which there exists a $k$ such that $H^{-1}C_k \notin G^{\circ}$ --- has measure zero. For example, in $\textsl{SE}(m)$ and $\textsl{SO}(m)$ this corresponds to the case where $H^{-1}C_k$ represents a rotation of angle $\pm\pi$. In the rare cases in which this situation occurs (in practice, it seldom occurs because the set has measure zero), one may compute the minimizer $s$ for the true function $q_k(s)=\big\|\log\big(H^{-1}P_k(s)\big)\big\|^2$ using another method, such as the Piyavskii–Shubert algorithm.

\subsection{Creating $G$-polynomial curves from samples}
\label{subs:approximateC}
Many papers exist for Lie Group interpolation \cite{sefrant1998interpolation,watterson2016smooth,mueller2025poe}, considering many constraints as matching points $C_k$ and specified high-order derivatives at each point. We consider here a particular case which is enough for the main motivation of this paper (Lie Group vector fields, as \cite{bartelt2026constructive}).

Given $K+1$ sample points $C_k \in G$, we want to interpolate them with a $G$-polynomial curve while having matching first derivative at the endpoints of the segments. We do not prescribe any specific derivative: we just want them to be equal. Furthermore, we assume:

\begin{itemize}
    \item The curve is closed, hence $C_{K+1} = C_1$. This is for the sake of simplicity: the methodology can be easily adapted to non-closed curves. Thus, in this Subsection, everytime the index $K+1$ appears, it should be reinterpreted as $1$ (e.g., $e_{K+1} = e_1$).
    \item The points $C_k$ are sufficiently close so that $C_k^{-1}C_{k+1} \in G^{\circ}$ for $k \in \{1,...,K\}$.
\end{itemize}

Since we require continuity between segments and continuity of the first derivative, it is sufficient to use second-order $G$-polynomials, i.e., we select $M=2$. We then impose
\begin{equation}
    P_k(0) = C_k \ , \ P_k(1) = C_{k+1}
\end{equation}
\noindent for $k \in \{1,...,K\}$. Using \eqref{eq:paramP}, this can be written as
\begin{equation}
    \exp(E_k(0)) = P_{k0}^{-1}C_k \ , \ \exp(E_k(1)) = P_{k0}^{-1}C_{k+1}.
\end{equation}

Since $E_k(0)=0$, it follows that $P_{k0}=C_k$ for $k \in \{1,...,K\}$, and the first equation is automatically satisfied. We therefore need to satisfy $\exp(E_k(1)) = P_{k0}^{-1}C_{k+1} = C_k^{-1}C_{k+1}$ for $k \in \{1,...,K\}$, which requires determining the coefficients $E_{km}$. Since we assumed that $C_k^{-1}C_{k+1} \in G^{\circ}$ for all $k$, we can apply the logarithm to both sides, yielding
\begin{equation}
\label{eq:valcond}
 E_k(1) = E_{k1}+E_{k2} = \log(C_k^{-1}C_{k+1})
\end{equation}
\noindent for $k \in \{1,...,K\}$. These equations are \emph{linear} in the coefficients $E_{km}$. They provide $K$ equations for the $2K$ unknown coefficients, so $K$ additional equations are needed. These can be obtained by enforcing continuity of the derivatives:
\begin{equation}
\label{eq:dercond}
    P'_k(1) = P'_{k+1}(0)
\end{equation}
\noindent for $k=1,...,K$. Let $A_k \triangleq \log(C_k^{-1}C_{k+1})$ for $k=1,...,K$. Using the differential map, \eqref{eq:dercond} reduces to
\begin{equation}
    C_k D\exp_{A_k}[E_{k1}+2E_{k2}] = C_{k+1}D\exp_{0}[E_{k+1,1}].
\end{equation}

Since $D\exp_{0}[E]=E$, we obtain
\begin{eqnarray}
\label{eq:dercond2}
 && C_k D\exp_{A_k}[E_{k1}+2E_{k2}] = C_{k+1}E_{k+1,1} \rightarrow  \nonumber \\
 && \exp(-A_k) D\exp_{A_k}[E_{k1}+2E_{k2}] = E_{k+1,1}
\end{eqnarray}
\noindent for $k=1,...,K$.

The map $\exp(-A)D\exp_A:\frak{g}\to\frak{g}$ is related to the matrix $\mathbb{I}_A$ defined in Definition \ref{def:leftjac}. Applying \textsl{vec} to \eqref{eq:valcond} and \eqref{eq:dercond2}, and using Definition \ref{def:leftjac}, we obtain
\begin{eqnarray}
\label{eq:setofcond}
   && \textsl{vec}(E_{k1})+\textsl{vec}(E_{k2}) = \textsl{vec}(A_k) \nonumber \\ 
   && \mathbb{I}_{A_k}\big(\textsl{vec}(E_{k1})+2\textsl{vec}(E_{k2})\big) = \textsl{vec}(E_{k+1,1})
\end{eqnarray}
\noindent for $k=1,...,K$.

From the first equation we obtain $\textsl{vec}(E_{k2})=\textsl{vec}(A_k)-\textsl{vec}(E_{k1})$. Substituting this into the second equation yields equations involving only $E_{k1}$:
\begin{eqnarray}
    \mathbb{I}_{A_k}\textsl{vec}(E_{k1}) + \textsl{vec}(E_{k+1,1}) = 2\mathbb{I}_{A_k}\textsl{vec}(A_k).
\end{eqnarray}

Note that
\begin{equation}
  \mathbb{I}_{A_k}\textsl{vec}(A_k) = \textsl{vec}\Big(\exp(-A_k)D\exp_{A_k}[A_k]\Big).
\end{equation}

If $A$ and $E$ commute, then $\exp(-A)D\exp_A[E]=E$, which follows directly from the definition of $D\exp_A$. Hence $\mathbb{I}_{A_k}\textsl{vec}(A_k)=\textsl{vec}(A_k)$. Defining $e_k \triangleq \textsl{vec}(E_{k1})$ and $a_k \triangleq \textsl{vec}(A_k)$, we obtain the following system of $K$ linear equations in $K$ variables:
\begin{equation}
\mathbb{I}_{A_k}e_k + e_{k+1} = 2a_k
\end{equation}
\noindent for $k=1,...,K$. It can be verified by simple substitution that this system is solvable if the matrix
\begin{equation}
    I_{n \times n} + (-1)^{K+1}\mathbb{I}_{A_K}\mathbb{I}_{A_{K-1}}\cdots\mathbb{I}_{A_2}\mathbb{I}_{A_1}
\end{equation}
\noindent is invertible. Once $E_{k1}=\textsl{vec}^{-1}(e_k)$ is obtained, the remaining coefficients follow from $E_{k2}=A_k-E_{k1}$. Note that this linear system can be solved in $\mathcal{O}(n^3K)$ operations, in which $n$ is the dimension of the group.

\subsection{Computing  $\mathbb{I}_A$ and $\mathbb{L}_A$ in the case of \textsl{\textsl{SE}}(3)}
\label{subs:computinglA}

Now consider the case $G=\textsl{SE}(3)$ and $\frak{g}=\frak{se}(3)$. According to Proposition \ref{prop:prop3}, to compute $\mathbb{L}_A$, we need to compute $\mathbb{I}_A$, and thus this section will focus on this. The formulas in this section come mainly from classical Jacobian formulae for $\textsl{SE}(3)/\textsl{SO}(3)$ in \cite{barfoot2017state}.

To define the matrix $\mathbb{I}_A$, the operator \textsl{vec} must first be specified. Observe that any $A\in\frak{se}(3)$ can be written as
\begin{equation}
A=
\left(
\begin{array}{cccc}
0 & -\alpha_z & \alpha_y & a_x \\
\alpha_z & 0 & -\alpha_x & a_y \\
-\alpha_y & \alpha_x & 0 & a_z \\
0 & 0 & 0 & 0
\end{array}
\right).
\end{equation}

Hence, define the operator \textls{vec}:\,$\frak{se}(3)\to\mathbb{R}^6$ that maps the matrix $A$ to the vector $(a_x\ a_y\ a_z\ \alpha_x\ \alpha_y\ \alpha_z)^{\top}$. With this definition, the inverse operator \textls{vec}$^{-1}:\mathbb{R}^6\to\frak{se}(3)$ is uniquely determined.

To proceed, we construct the matrix  $J\in\mathbb{R}^{6\times6}$. Let $\theta=\|\alpha\|$ and define
{\small
\begin{eqnarray}
&& S_1 {=} \left(
\begin{array}{@{}ccc@{}}
0 & -\alpha_z & \alpha_y \\
\alpha_z & 0 & -\alpha_x \\
-\alpha_y & \alpha_x & 0
\end{array}
\right) \ , \
S_2 {=} \left(
\begin{array}{@{}ccc@{}}
0 & -a_z & a_y \\
a_z & 0 & -a_x \\
-a_y & a_x & 0
\end{array}
\right) \nonumber \\
&& S = \left(\begin{array}{cc} S_1 & S_2 \\ 0_{3\times3} & S_1 \end{array}\right) \nonumber \\
&& c_1 {=} \frac{4-\theta\sin(\theta)-4\cos(\theta)}{2\theta^2} \ , \
c_2 {=} \frac{4\theta-5\sin(\theta)+\theta\cos(\theta)}{2\theta^3} \nonumber \\
&& c_3 {=} \frac{2-\theta\sin(\theta)-2\cos(\theta)}{2\theta^4} \ , \
c_4 {=} \frac{2\theta-3\sin(\theta)+\theta\cos(\theta)}{2\theta^5}. \nonumber
\end{eqnarray}
}
Then
\begin{equation}
J = I_{6\times6} + c_1S + c_2S^2 + c_3S^3 + c_4S^4 .
\end{equation}

Next, note that the matrix $J$ has the following structure: there exist six vectors $a_i\in\mathbb{R}^3$ such that \cite{barfoot2017state}
\begin{equation}
J =
\left(
\begin{array}{cccccc}
a_1 & a_2 & a_3 & a_4 & a_5 & a_6 \\
0_{3\times1} & 0_{3\times1} & 0_{3\times1} & a_1 & a_2 & a_3
\end{array}
\right).
\end{equation}

Let $Q\in\textsl{SO}(3)$ and $d\in\mathbb{R}^3$ denote the rotation and translation components, respectively, of $\exp(-A)\in\textsl{SE}(3)$. Also let:
\begin{align} & b_1 = Qa_1 \ , \ b_2 = Qa_2 \ , \ b_3 = Qa_3 \nonumber \\ & b_4 = Qa_4 {-} b_1 {\times} d \ , \ b_5 = Qa_5 {-} b_2 {\times} d \ , \ b_6 = Qa_6 {-} b_3 {\times} d \ , \nonumber
\end{align}
where $\times$ denotes the cross product. Then the matrix $\mathbb{I}_A\in\mathbb{R}^{6\times6}$ can be obtained as
\begin{equation}
\mathbb{I}_A =
\left(
\begin{array}{cccccc}
b_1 & b_2 & b_3 & b_4 & b_5 & b_6 \\
0_{3\times1} & 0_{3\times1} & 0_{3\times1} & b_1 & b_2 & b_3
\end{array}
\right).
\end{equation}
Note that the diagonal block structure of the matrix $\mathbb{I}_A$ can be exploited to efficiently compute its inverse. 

\section{Experiments}

\subsection{Efficiency Analysis}

To validate the efficiency and accuracy of the analytical approximation, we conducted extensive testing in $G = \textsl{SE}(3)$, using the Piyavskii–Shubert algorithm as the baseline for finding the global minimum. Figure \ref{fig:topologies} illustrates the reference $G$-polynomial curves with varying orientation frames. 
\begin{figure*}[t]
    \centering
    \includegraphics[width=\textwidth]{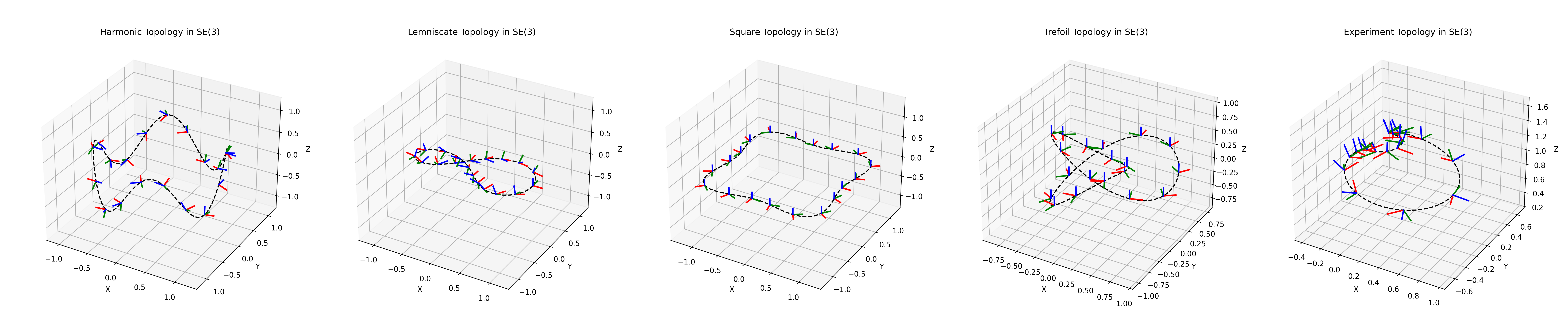}
    \caption{The baseline synthetic topologies in $\textsl{SE}(3)$ used for evaluation. The dashed line represents the translational path, while the RGB axes denote the orientation frames varying continuously along the parameter $s$.}
    \label{fig:topologies}
    \vspace{-0.6cm}
\end{figure*}

We constructed a comprehensive dataset originating from five base curves (named ``harmonic'', ``lemniscate'', ``square'', ``trefoil'' and ``experiment'' in Figure \ref{fig:topologies}). By systematically varying the frequency, scale, and shear of these topologies, we generated 150 unique base curves. Each curve was then discretized and interpolated into $G$-polynomials using an odd number of segments, $K \in \{13, 15, \dots, 41\}$. As mentioned in Subsection \ref{subs:approximateC}, this interpolation is linear in the number of segments $K$. From our experiments, it took, on average, $1.9$ $\mu s$ per segment to interpolate on a host computer equipped with an AMD Ryzen 3 7320U with Radeon Graphics @ 2.4 GHz CPU, and 8 GB of RAM.

Finally, for every configuration, we sampled $400$ test poses ($H \in \textsl{SE}(3)$), ultimately yielding exactly $150 \times 14 \times 400 = 840,000$ unique test pairs $(\mathcal{C},H)$.
For the baseline Piyavskii–Shubert algorithm, we established global optimality using a numerically estimated Lipschitz constant 
with a 5\% safety margin and a tolerance of $\delta = 10^{-4}$. Both algorithms were implemented in C++ and evaluated under 
identical environment.

We will now present the \emph{relative} computational time of our algorithm compared to the Piyavskii–Shubert method. The absolute computational time of our algorithm will be reported in the next subsection; nevertheless, it is also very fast, averaging on the order of $30 \mu s$ in average. Figure~\ref{fig:speedup_bands} illustrates the computational speedup ($x = \mbox{Time}_{\text{Baseline}}/\mbox{Time}_{\text{Our}}$) from $K$ up to $40$. 
The analytical method provides significant time reductions at lower values of $K$. 
As $K$ increases, the $G$-polynomial curve degenerates into a finely discretized point sequence, requiring more polynomial root 
evaluations and decreasing the relative speedup. Nevertheless, the analytical method maintains an average speedup 
ratio $> 1$ up to $K \approx 80$, a segment density well beyond typical operational requirements.
\begin{figure}[htpb]
    \centering
    \includegraphics[trim=3.8cm 0cm 3.8cm 1cm, clip, width=0.9\columnwidth]{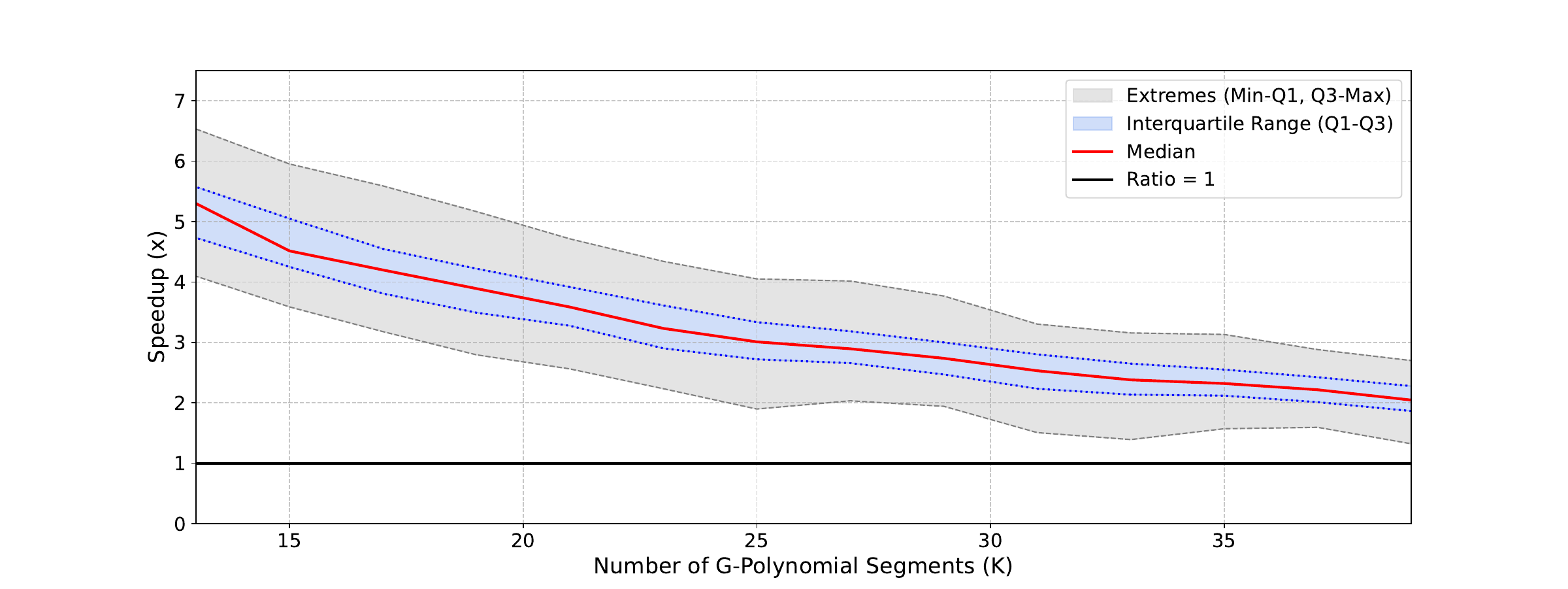}
    \caption{Distribution of analytical speedup by the number of $G$-polynomial segments ($K$).}
    \label{fig:speedup_bands}
\end{figure}

Computational performance is also dependent on the spatial proximity of the target pose to the curve. 
Figure \ref{fig:speedup_distance} demonstrates the interquartile range of the speedup as a function of the true distance $D(H)$. For poses very close to the curve ($D(H) < 0.06$), the baseline becomes competitive. This happens because the Piyavskii–Shubert's mechanism rapidly 
prunes the search space when the global minimum is small, which happens very close to the curve (where the global minimum is $0$). Conversely, for poses not very close, the analytical approach offers a dominant, stable advantage, as its computational complexity remains bounded by the fixed segment count regardless of $D(H)$. It is also important to note that this negative proximity effect on the speedup becomes less pronounced as the number of segments $K$ decreases. Nevertheless, we can adopt a hybrid approach and switch to the Piyavskii–Shubert method when we are in regions very close to the curve.

\begin{figure}[htpb]
\centering
\includegraphics[trim=3.2cm 0cm 3.4cm 0cm, clip, width=0.9\columnwidth]{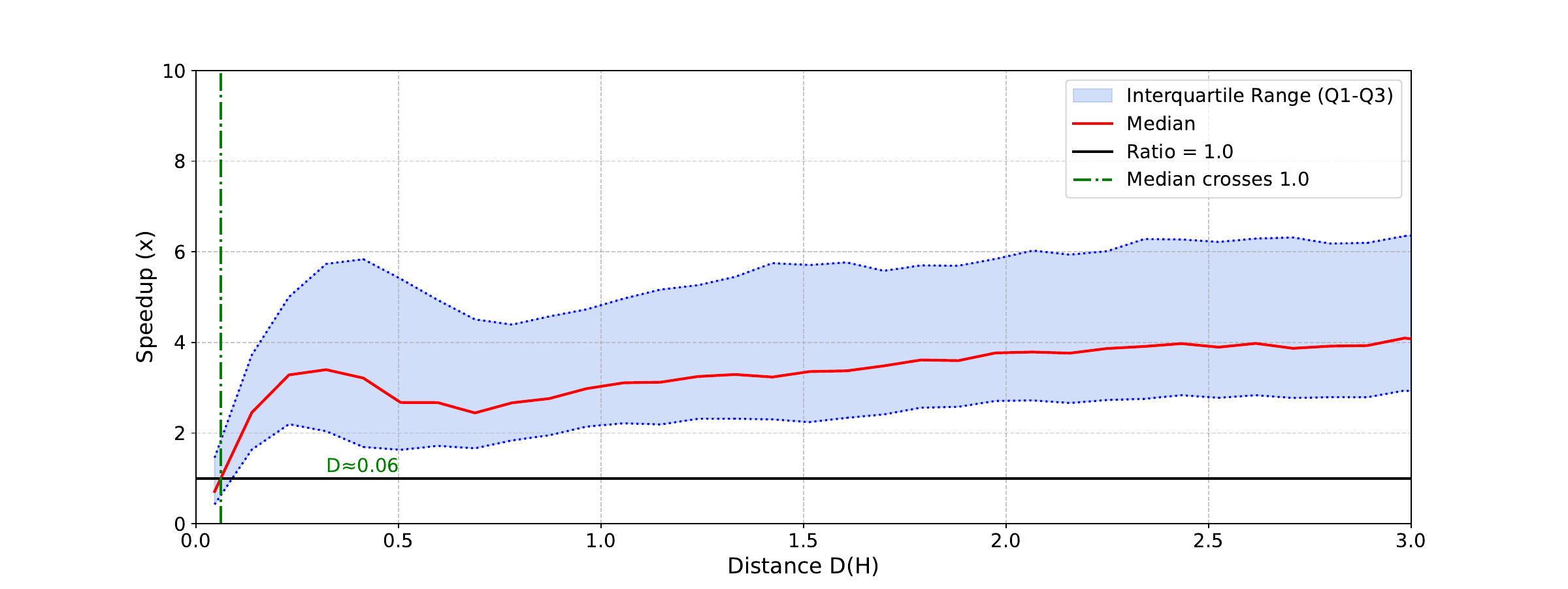}
\caption{Interquartile range of the computational speedup relative to the spatial distance $D(H)$. The last quartile, which can exhibit very large speedups, is not shown so as not to distort the graph.}
\label{fig:speedup_distance}
\vspace{-0.2cm}
\end{figure}

Because the proposed method relies on a first-order approximation, directly evaluating the polynomial roots yields a fast but sub-optimal initial guess $s_{best}$. To eliminate this theoretical difference gap, a few steps of a localized derivative descent step, as explained in Subsection \ref{subs:alg}, are applied to drive the parameter to the true local minimum, which, hopefully, is going also to be global.

We evaluated the error in the parameter domain between the minimizer yielded by the analytical approximation ($\tilde{s}$) 
and the Piyavskii–Shubert minimizer ($s_{ori}$). Because the parameter $s$ is normalized in the periodic interval 
$[0, K]$ to represent closed paths, the error was calculated as the shortest distance on the periodic domain: 
$\min(|\tilde{s} - s_{ori}|/K, 1 - |\tilde{s} - s_{ori}|/K)$. The approximation proved highly accurate: only 0.605\% of the total samples  exhibited errors above 1\%. As defined in Sec~\ref{subs:validity}, these approximation degradations occurs mainly when we are close to a point $H$ when the curve has more than one equidistant minimums or when the test pose approaches singularities (where $H^{-1}C_k \notin G^{\circ}$).

\subsection{Real-World Kinova Gen3 Experiment}

To demonstrate the practical viability of the proposed analytical distance formulation in real-time robotic applications, 
we implemented the vector field algorithm in \cite{bartelt2026constructive}, which requires the distance computation, for closed-loop path tracking experiment using a physical 7-DOF Kinova Gen3 manipulator (see Fig.~\ref{fig:real_experiment_snapshots} and the video in the supplementary material). 
The target curve $\mathcal{C}$ was taken like the last topology in Figure \ref{fig:topologies}  with  $K=17$ 
segments.

\begin{figure*}[htpb]
    \centering
    \includegraphics[width=0.48\textwidth]{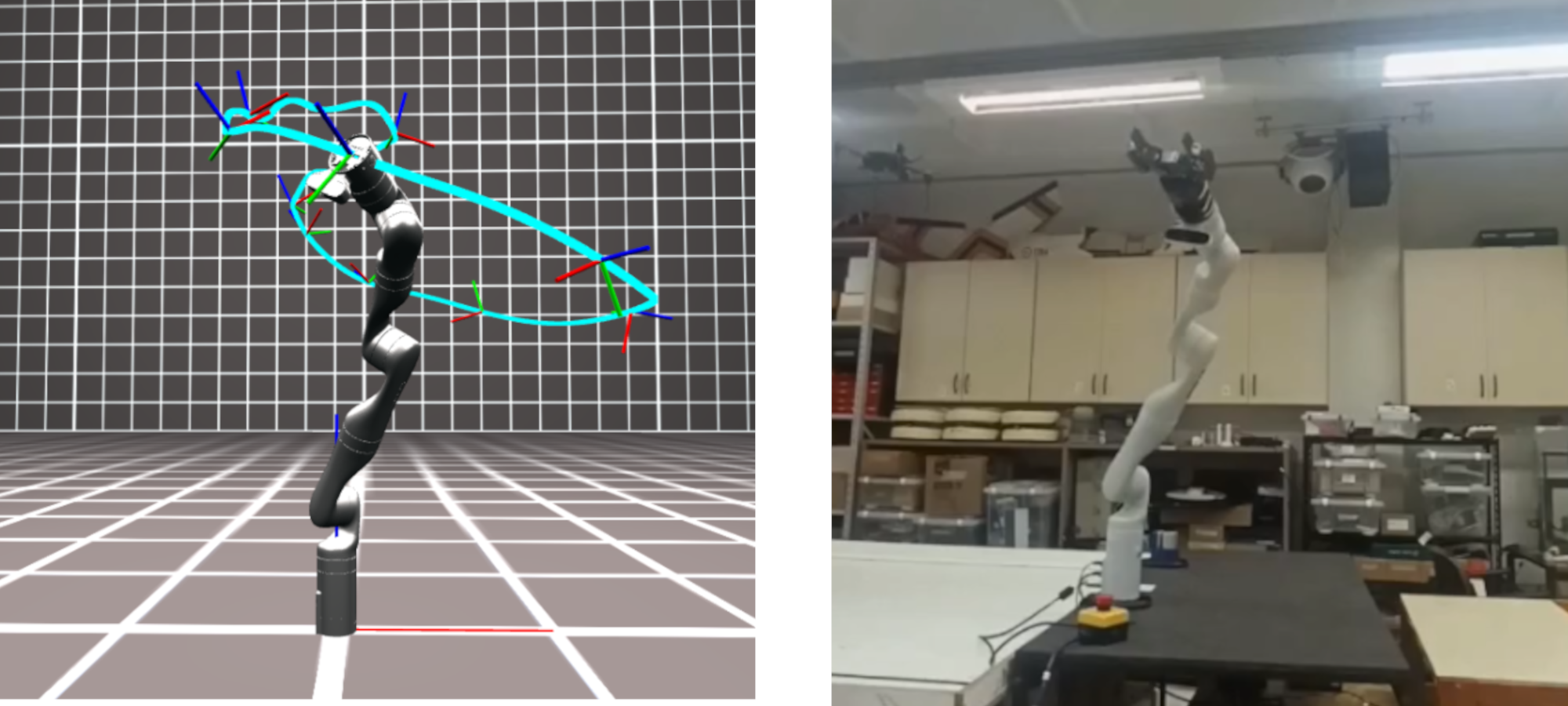}
    ~
    \includegraphics[width=0.48\textwidth]{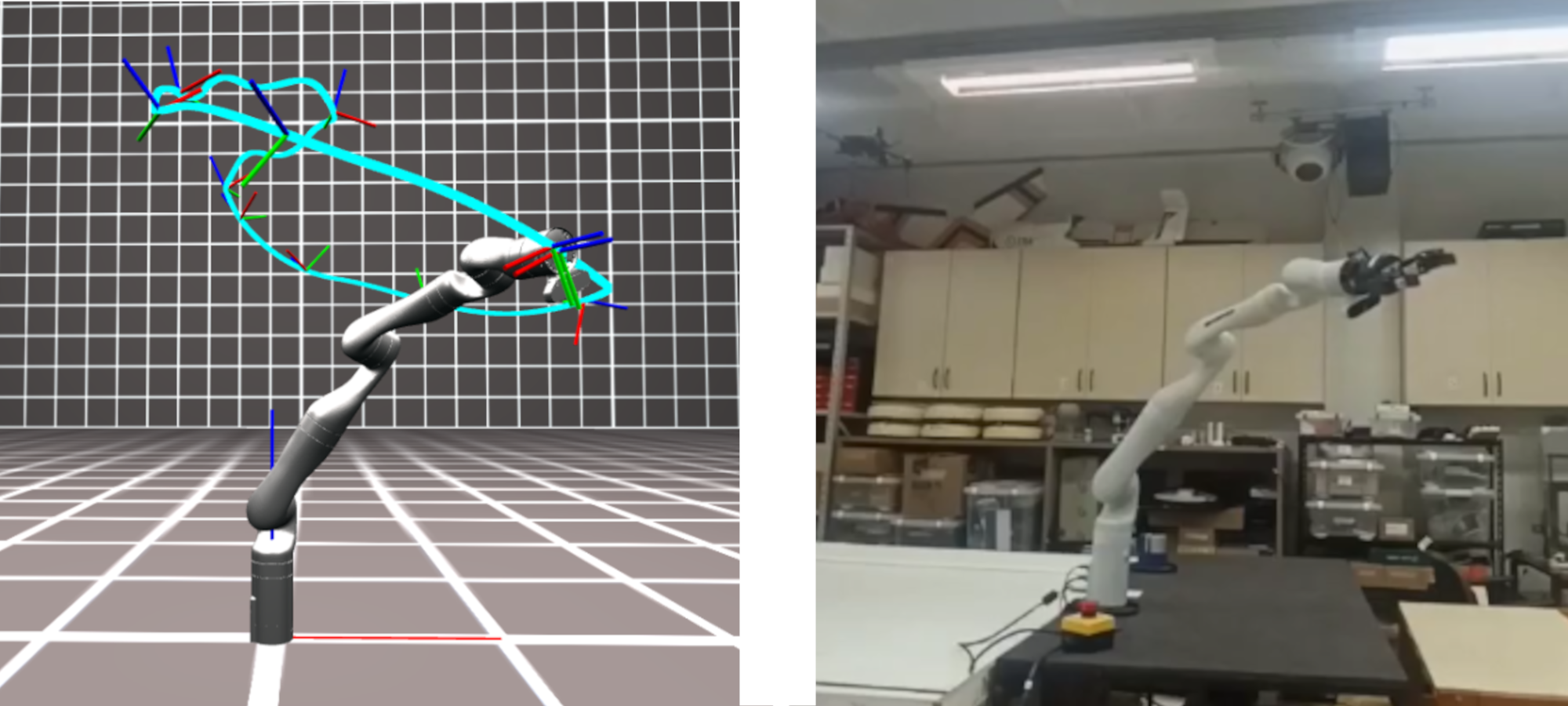}
    \caption{Snapshots of the experiment, featuring the reconstruction of the manipulator's configuration and the interpolated target curve (left), paired with the corresponding real-world view of the experiment (right).}
    \label{fig:real_experiment_snapshots}
    \vspace{-0.6cm}
\end{figure*}

The control architecture was deployed on a host computer equipped with an Intel i5-10300H @ 4.5 GHz CPU, with 8 GB of RAM. 
The host machine communicated with the robot's low-level controller via a UDP/IP Ethernet connection. 
The robot was driven in velocity control mode at a frequency of 100 Hz ($\Delta t = 0.01 \text{ s}$). 
At each control iteration, the current end-effector pose $H \in \textsl{SE}(3)$ was acquired via forward kinematics 
based on the measured joint angles $q$. The analytical solver was then queried to evaluate the spatial distance $D(H)$ 
and compute the corresponding corrective twist $\xi \in \mathbb{R}^6$ generated by the vector field. Finally, 
this task-space twist was mapped to joint velocity commands $\dot{q}$ using inverse differential kinematics.

 Over the duration 
of the experiment, the vector field solver was queried exactly $131,761$ times. The measured average computation time 
per query was $32.20\ \mu\text{s}$, with a standard deviation of $11.46\ \mu\text{s}$.

\section{Conclusion and Future Work}
\label{sec:conclusion}

Vector field navigation in matrix Lie groups requires continuously computing the distance between a robot's configuration and a 
target curve, which is typically a computationally expensive task. This paper introduced an efficient analytical approach to 
approximate this distance by representing the target path as a $G$-polynomial curve with $K$ segments. By exploiting this structure, the method 
reduces the optimization problem to a small number of polynomial root-finding computations.

We presented practical formulae for the case in which the group is $G = \textsl{SE}(3)$. Extensive numerical evaluations in this setting demonstrated that the proposed formulation yields significant computational 
speedups for several segments $K \le 80$ when compared to the most suitable alternative approach, the Piyavskii–Shubert algorithm.

Our analysis showed that the baseline algorithm remains 
competitive for poses very close to the curve due to its rapid search space pruning. Therefore, 
future work will focus on refining the analytical method to further improve its computational efficiency in these situations. This can be done by, instead of checking the distance $D(H)$ for all segments, pruning some segments by computing a much quicker distance proxy $\tilde{D}(H)$. Another alternative is to apply the Piyavskii–Shubert for a few steps to bound the interval in which the optimal solution lies on and then apply our methodology only on the segments that overlap with this interval.

\bibliographystyle{ieeetr}  
\bibliography{bibl}
\end{document}